\newtheorem{theorem}{Theorem}
\newtheorem{proposition}[theorem]{Proposition}
\def\WidthOne{1.65cm}
\ifcvprfinal\pagestyle{empty}\fi
\begin{document}

\title{Spectral Clustering  with Jensen-type kernels and their multi-point extensions}
\author{Debarghya Ghoshdastidar,
Ambedkar Dukkipati, Ajay P. Adsul and Aparna S. Vijayan\\ 
Department of Computer Science \& Automation\\
Indian Institute of Science\\
Bangalore 560012, India\\
\texttt{\small \{debarghya.g,ad,ajay.adsul,aparna\}@csa.iisc.ernet.in} }

\maketitle

\begin{abstract}
Motivated by multi-distribution divergences,
which originate in information theory, we propose a
notion of `multi-point' kernels, and study
their applications. We study a class of kernels based on
Jensen type divergences and show that 
these can be extended to measure similarity among multiple points.
 We study tensor flattening methods 
 and develop a multi-point (kernel) spectral clustering (MSC) method.
 We further emphasize on a special
 case of the proposed kernels, which is a multi-point extension of the
 linear (dot-product) kernel and show the existence of cubic time
 tensor flattening algorithm in this case. Finally, we illustrate the
 usefulness of our contributions using standard data sets and image
 segmentation tasks.  
\end{abstract}
\section{Introduction}
Divergences, though introduced at the birth of information theory,
Jensen-Shannon (JS) divergence appeared in the literature relatively
recently~\cite{Lin_1991_jour_TIT}, and 
the unique characteristic of this divergence is that one can measure a divergence between more than
two probability distributions. Hence, one can term this as a multi-distribution divergence. 


Recently, there have been a growing interest in kernel connections of 
the JS divergence in the machine learning community that started
from the works of Endres and Schindelin~\cite{Endres_2003_jour_TIT},
who observed that 
$\sqrt{\text{JS}}$ is a Hilbertian metric. This renewed interests in
viewing Jensen-type divergences as dissimilarity measures.
Studies by Martins et al.~\cite{Martins_2009_jour_JMLR} extend the idea further to 
the \emph{nonextensive} case to formulate the so-called Jensen-Tsallis (JT) kernels on finite measures,
that has proved to be quite useful in 
text classification~\cite{Martins_2009_jour_JMLR} and shape recognition~\cite{Bicego_2010_conf_ICPR}.

Though the JT-kernels and their applications have been well studied,
two significant implications of these kernels have not been explored yet.
The first one lies in the simple observation that JT-kernel
retrieves the linear/dot-product kernel ($x^Ty$) in a special case,
and hence, these kernels may have interesting properties even on the Euclidian space.
The second is multi-distribution nature of JS-divergence.
This fact easily extends to Jensen-type kernels, in particular the JT-kernel,
which leads to the notion of multi-point kernels studied in this paper.

The concept of multi-point similarities can be traced back to 
the studies on $n$-metrics, which started during works of Hayashi~\cite{Hayashi_1972_jour_AISM}.
But its applications to unsupervised learning has been observed relatively 
recently~\cite{Govindu_2005_conf_CVPR,Agarwal_2005_conf_CVPR}.
The works of Govindu~\cite{Govindu_2005_conf_CVPR} and 
Chen and Lerman~\cite{Chen_2009_jour_JourCV}
is worth mentioning in this respect, who combined multi-point similarities to 
spectral clustering in context of computer vision.
However, the proposed methods are model specific, and hence, are restricted to 
applications like hybrid linear modeling and motion segmentation.
On the other hand, spectral methods~\cite{Shi_2000_jour_TPAMI,Ng_2002_conf_NIPS}
are quite general and their scope is broad ranging from 
image segmentation~\cite{Shi_2000_jour_TPAMI} to analysis of
correlated mutations in HIV-1
protease~\cite{Liu_2008_jour_Bioinformatics}. 
To cater to the widely varying application of spectral clustering and to improve upon
the Gaussian distance measure commonly employed in spectral based learning,
it is quite tempting to study the spectral clustering using multi-point kernels
as done in this work. Our approach distinctly deviates from the existing 
multi-point spectral methods in the use of multi-point kernel not restricted
to model dependent similarities as in spectral curvature clustering (SCC)~\cite{Chen_2009_jour_JourCV}.
The contributions in this paper are listed below:
\\(1) We extend the JT-kernels on finite measures 
 to define similar kernels on the $d$-dimensional unit cube $[0,1]^d$,
 that encompass the linear (dot-product) kernel.
 Further, we use the idea of multi-distribution divergences to define multi-point 
 extensions of above kernels.
\\(2) We develop a model-independent spectral clustering algorithm using multi-point kernels,
 which we call as MSC.
 Though MSC has an exponential complexity, like SCC~\cite{Chen_2009_jour_JourCV}, we prove that 
 a cubic time complexity can be achieved in the 
 special case of multi-point extension of linear kernel.
\\(3) We study the performance of the proposed method and the kernels
 in the context of image segmentation.

\section{Nonextensive Jensen-type kernels}
\label{preliminaries}
Before going into the main discussions of this paper, 
we briefly review the JT-kernels on probability measures~\cite{Martins_2009_jour_JMLR}.
It suffices to study kernels on
the $d$-dimensional probability simplex, denoted by
$\Delta^{d-1} = \{ (p(1),\ldots,p(d)): p(j) \in [0,1] \forall j, \sum_{j=1}^{d} p(i) = 1 \}$.
The Jensen-Tsallis $q$-difference among $n$ p.m.f.s 
{$p_{i} = (p_{i}(1),\ldots, p_i(d)) \in \Delta^{d-1}$}, {$i=1,\ldots,n$} 
is defined as~\cite[Section 5]{Martins_2009_jour_JMLR} 
\begin{align}
T_q\left(p_{1}, \ldots, p_{n} \right) &= H_q\left(\bar{p}\right)- \frac{1}{n^q} \sum_{i=1}^n H_q(p_{i}),
\label{eq_jt_qdiff}
\end{align}
where $\bar{p}=(\bar{p}(1),\ldots,\bar{p}(d))$ is the p.m.f. defined as 
{$\bar{p}(i) = \frac{1}{n}\sum_{j=1}^{n} p_{j}(i)$}, $i=1,\ldots,d$, and
$H_q$ is the nonextensive or Tsallis entropy~\cite{Tsallis_1988_jour_StatPhy}
that has been extensively used in statistical mechanics to study multifractal concepts.
It is given by  $H_q(p) = \frac{1}{(q-1)} \big({1 - \sum_{j=1}^{d} p(j)^q} \big)$,
where $q\in\mathbb{R}$, $q\neq1$ is a parameter related to the nature of the physical system.
As $q\to1$, the classical case of Shannon entropy  
is retrieved, $H_1(p) = - \sum_{j=1}^{d} p(i) \ln \big({p(j)}\big)$, and the $q$-difference~\eqref{eq_jt_qdiff}
in this case corresponds to the JS-divergence.

Martins et al.~\cite{Martins_2009_jour_JMLR} showed the theoretical justifications behind
the definition of JT $q$-difference, and observed that {$T_q\left(p_{1}, p_{2}\right) \leqslant \ln_q(2)$}
for all {$p_{1}, p_{2} \in \Delta^{d-1}$}. Based on this, a kernel on 
probability measures {$\widetilde{k}_q:\Delta^{d-1}\times\Delta^{d-1}\mapsto[0,\infty)$}
was proposed as~\cite[Definition 26]{Martins_2009_jour_JMLR}
\begin{align}
  &\widetilde{k}_q\left(p_{1}, p_{2}\right) = 2^q\left(\ln_q(2) - T_q\left(p_{1}, p_{2}\right)\right) 
  \nonumber\\
  &= {\textstyle\frac{1}{(q-1)}} \sum_{j=1}^{d} \big(\left(p_{1}(j) + p_{2}(j)\right)^q - p_{1}(j)^q - p_{2}(j)^q\big)
\label{eq_jt_ker_prob}
\end{align}
for $q\neq1$, which is the Jensen-Tsallis (JT) kernel between 
the two probability measures {$p_{1}$} and {$p_{2}$}.
The above class of kernels {$\widetilde{k}_q$} is positive definite on $\Delta^{d-1}$
for $0\leqslant q \leqslant 2$~\cite{Martins_2009_jour_JMLR}.
For $q=2$, we have a dot-product kernel on $\Delta^{d-1}$
\begin{align}
  \label{eq_lin_ker_prob}
  \widetilde{k}_2\left(p_{1}, p_{2}\right)
  = 2\sum_{j=1}^{d} p_{1}(j)p_{2}(j) = 2\left(p_{1}(j)\right)^{T}\left(p_{2}(j)\right),
\end{align}
and in the limit of $q\to1$, we have the JS-kernel defined as
\begin{align}
  \widetilde{k}_1\left(p_{1}, p_{2}\right)
  =& \sum_{j=1}^{d} \big(\left(p_{1}(j) + p_{2}(j)\right)\ln\left(p_{1}(j) + p_{2}(j)\right)
  \nonumber \\
  &- p_{1}(j)\ln\left(p_{1}(i)\right) - p_{2}(j)\ln\left(p_{2}(j)\right)\big).
  \label{eq_js_ker_prob}
\end{align}

\section{The notion of multi-point kernels}
\label{sec_real_extn}
We now present the main idea of this paper -- multi-point extensions of the 
JT-kernels,~\eqref{eq_jt_ker_prob}-\eqref{eq_js_ker_prob}. To extend the scope
of these kernels, we first extend them to the real space.
More specifically, we present extensions to the set $[0,1]^d$. 
This is a technical requirement, and is not restrictive since
it is a common practice to normalize features of data
and such a set suffices for most datasets.
We proceed along the lines
of the defined probability kernel~\eqref{eq_jt_ker_prob}, and define
an extension of JT-kernel {$k_q:[0,1]^d\times[0,1]^d\mapsto[0,\infty)$} of the form
\begin{align}
  &{k}_q(x, y) = 
  \nonumber \\
  &\left\{
      \begin{array}{l}
        \frac{1}{(q-1)} \sum\limits_{j=1}^{d} \big( (x(j) + y(j))^q - x(j)^q - y(j)^q \big) 
        \\ 
        \hfill \text{for } q\neq1 
	\\
        \sum\limits_{j=1}^{d} \big( (x(j) + y(j))\ln(x(j) + y(j)) 
        \\
        \quad - x(j)\ln(x(j)) - y(j)\ln(y(j)) \big) 
        \hfill \text{ for } q=1,
      \end{array} \right.
\label{eq_jt_ker}
\end{align}
where {$x=(x(1),\ldots,x(d)),y=(y(1),\ldots,y(d))\in[0,1]^d$}. 
The special case of linear kernel on $[0,1]^d$ follows similar to~\eqref{eq_lin_ker_prob}.
The significance of JT-kernel is the fact that
while the Gaussian kernel follows the nature of the Euclidian distance, similar to the linear kernel,
the distance in case of Jensen-type kernels usually exhibit a skewed behavior.
Further, localization effects are much less in the Jensen-type kernels 
as compared to the Gaussian kernel, since they are not exponentially decaying.
The following result shows that the
above extension does not affect the positive definiteness of the kernel.
This can be proved by mimicking
the proof of \cite[Proposition 27]{Martins_2009_jour_JMLR}
using the above kernel function.
\begin{proposition}
\label{thm_2pt_pd}
     JT-kernels {$k_q$} are positive definite on $[0,1]^d$ for all dimensions $d$ and all $q\in[0,2]$. 
\end{proposition}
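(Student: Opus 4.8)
The plan is to use the additivity of $k_q$ across coordinates to reduce the claim to a scalar kernel, and then to realise that scalar kernel as a positive mixture of rank-one kernels. First I would observe that for $q\neq1$ we have $k_q(x,y)=\sum_{j=1}^{d}\kappa_q\big(x(j),y(j)\big)$ with the one-variable kernel $\kappa_q(s,t)=\frac{1}{q-1}\big((s+t)^q-s^q-t^q\big)$, so for any finite sample the Gram matrix of $k_q$ is the sum of the coordinatewise Gram matrices of $\kappa_q$. As a sum of positive semidefinite matrices is positive semidefinite, it suffices to show that $\kappa_q$ is positive definite on $[0,1]$, indeed on all of $[0,\infty)$. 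This also explains why extending the domain from $\Delta^{d-1}$ to $[0,1]^d$ is free: exactly the same scalar building block is reused over the same range $[0,1]$.

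The core step is the integral representation
\begin{align}
\kappa_q(s,t)=\frac{q}{\Gamma(2-q)}\int_0^{\infty}\frac{\big(1-e^{-\lambda s}\big)\big(1-e^{-\lambda t}\big)}{\lambda^{1+q}}\,d\lambda,
\nonumber
\end{align}
valid for $q\in(0,2)\setminus\{1\}$, which I would derive by substituting the standard formulas $u^q=\frac{q}{\Gamma(1-q)}\int_0^{\infty}\lambda^{-1-q}\big(1-e^{-\lambda u}\big)\,d\lambda$ for $0<q<1$ and $u^q=\frac{q(q-1)}{\Gamma(2-q)}\int_0^{\infty}\lambda^{-1-q}\big(e^{-\lambda u}-1+\lambda u\big)\,d\lambda$ for $1<q<2$ into $(s+t)^q-s^q-t^q$. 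In both regimes the constant and linear contributions cancel and the integrand collapses through the identity $1-e^{-\lambda s}-e^{-\lambda t}+e^{-\lambda(s+t)}=(1-e^{-\lambda s})(1-e^{-\lambda t})$, so that after dividing by $q-1$ one is left in each case with the common positive prefactor $q/\Gamma(2-q)$. Positive definiteness is then immediate: for each fixed $\lambda>0$ the kernel $(s,t)\mapsto(1-e^{-\lambda s})(1-e^{-\lambda t})$ has the rank-one form $\phi_\lambda(s)\phi_\lambda(t)$ and is therefore positive definite, and integrating such kernels against the positive measure $\lambda^{-1-q}\,d\lambda$ and scaling by a positive constant preserves positive definiteness. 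This is essentially the representation underlying the proof of \cite[Proposition 27]{Martins_2009_jour_JMLR}.

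I expect the only genuinely technical point to be the convergence bookkeeping, which is precisely what pins down the range $q\in(0,2)$: near $\lambda=0$ the integrand behaves like $\lambda^{1-q}st$, integrable exactly when $q<2$, while near $\lambda=\infty$ it decays like $\lambda^{-1-q}$, integrable exactly when $q>0$, and one must also confirm the two Gamma-function constants. The three endpoints are then handled at the end, and the cleanest route is continuity in $q$: for any fixed finite sample the Gram matrix depends continuously on $q$ (including at $q=1$, where the definition is the limiting logarithmic branch), so positive semidefiniteness on the open set $(0,1)\cup(1,2)$ passes to its closure $[0,2]$ because the positive semidefinite cone is closed. As consistency checks, $q=2$ gives $\kappa_2(s,t)=2st$, a positive multiple of the dot product, and $q=0$ gives the constant kernel $\kappa_0\equiv1$ (with the convention $0^0:=1$), both of which yield positive semidefinite Gram matrices.
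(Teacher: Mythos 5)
Your proof is correct, but it is a genuinely different (and far more self-contained) route than the paper's, which offers no argument of its own beyond the instruction to mimic the proof of \cite[Proposition 27]{Martins_2009_jour_JMLR}. That cited proof goes through the theory of negative definite kernels and Hilbertian metrics on semigroups (Bernstein-function-type results and the Berg--Christensen--Ressel lemma turning a negative definite kernel $\psi$ into the positive definite kernel $\psi(s,0)+\psi(0,t)-\psi(s,t)-\psi(0,0)$), handling $q\in[0,1]$ and $q\in[1,2]$ by separate structural arguments. You instead make the Lévy-type representation that underlies all of that machinery explicit: after the (correct) coordinatewise reduction to the scalar kernel $\kappa_q$, you verify
\begin{align}
\kappa_q(s,t)=\frac{q}{\Gamma(2-q)}\int_0^{\infty}\frac{\bigl(1-e^{-\lambda s}\bigr)\bigl(1-e^{-\lambda t}\bigr)}{\lambda^{1+q}}\,d\lambda ,
\nonumber
\end{align}
and I checked both Gamma-function constants and the collapse of the integrand; they are right (in particular $(1-q)\Gamma(1-q)=\Gamma(2-q)$ reconciles the $0<q<1$ and $1<q<2$ regimes into the single positive prefactor $q/\Gamma(2-q)$). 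Your route buys several things the citation does not: an elementary proof using only calculus and rank-one kernels, an explicit feature map $s\mapsto\bigl(1-e^{-\lambda s}\bigr)$ into $L^2\bigl(\lambda^{-1-q}\,d\lambda\bigr)$, a transparent explanation of why the admissible range is exactly $[0,2]$ (integrability at $\lambda=0$ forces $q<2$, at $\lambda=\infty$ forces $q>0$), and validity on all of $[0,\infty)^d$ rather than just $[0,1]^d$. One caveat: your ``cleanest route'' of continuity in $q$ for the endpoints works at $q=1$ and $q=2$, but fails at $q=0$ whenever a data point has a zero coordinate, since $\kappa_q(0,t)=0$ for every $q>0$ while $\kappa_0(0,t)=1$ under the convention $0^0:=1$, so the $q=0$ Gram matrix need not be a limit of the $q>0$ ones. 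Fortunately your direct verification that $\kappa_0$ is a constant (hence positive semidefinite) kernel is not merely a consistency check --- it is what actually closes that endpoint.
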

%

We now present the multi-point extensions of the JT-kernel~\eqref{eq_jt_ker}.
The idea is based on the multi-distribution definition of Jensen-Tsallis $q$-difference~\eqref{eq_jt_qdiff}
where $n$ need not be equal to 2.
We extend the JT-kernel for arbitrary number of points in $\mathcal{X}=[0,1]^{d}$ to obtain a 
class of multi-point kernels $\{K_{q,n}\}_{n\in\mathbb{N}}$ with
{$K_{q,n}: \mathcal{X}^n \mapsto[0,\infty)$} defined as
\begin{align}
  &K_{q,n}\left(x_{1},\dots,x_{n}\right) =
  \nonumber\\
  &\left\{
      \begin{array}{ll}
        \frac{1}{(q-1)} \displaystyle\sum_{j=1}^{d}\bigg[ \bigg( \sum_{i=1}^{n}x_{i}(j)\bigg)^{q} 
        - \sum_{i=1}^{n}\left(x_{i}(j)\right)^{q} \bigg] 
        \hfill\text{ for } q\neq1\\
	\\
        \displaystyle\sum_{j=1}^{d} \bigg[\bigg(\sum_{i=1}^{n}x_{i}(j)\bigg) \ln \bigg(\sum_{i=1}^{n}x_{i}(j)\bigg)
        \\
        \hfill - \displaystyle\sum_{i=1}^{n}x_{i}(j)\ln x_{i}(j)\bigg] \text{ for } q=1.\\
      \end{array} \right.
\label{eq_jt_ker_mul}
\end{align}
The above definition is consistent with the multi-distribution extensions of JT $q$-difference.
Since it naturally extends a positive definite kernel, we refer to it as a kernel.  
In the linear case, \textit{i.e.}, for $q=2$, we retrieve a multi-point version of the dot-product kernel as
\begin{equation}
  K_{2,n}(x_{1},\ldots ,x_{n}) = 2 \sum_{i=1}^{n}\sum_{j=i+1}^{n}{x_{i}^{T}x_{j}} \;,
  \label{eq_lin_ker_mul}
\end{equation}
which will be discussed in greater detail in sequel.
%
The above extension of two-point kernels captures information about similarity 
among multiple points, and is capable of providing a more global measure of similarity.
Further, the proposed multi-point similarity is not dependent 
on any geometric model, unlike the ones in~\cite{Govindu_2005_conf_CVPR,Chen_2009_jour_JourCV},
and hence, it is applicable in a more general framework. 
Next, we present a spectral clustering method based on multi-point kernels.
The basic approach is similar to the 
spectral curvature clustering (SCC)~\cite{Chen_2009_jour_JourCV},
but it is applicable for any multi-point kernel.

\section{Multi-point spectral clustering}
\label{sec_spectral}

\subsection{Algorithm}
\begin{algorithm}[b]
\caption{Multi-point Spectral Clustering (MSC)}
\label{algo_SCC}
{\bf Given: } $n^{th}$ order tensor $\mathcal{A}$ representing affinity among data points {$\{x_{1},\ldots,x_{N}\} \in \mathcal{X}$}.

{\small\bf 1.} Unfold $\mathcal{A}$ to obtain flattened matrix $A$, and let $V = AA^{T}$.\\
{\small\bf 2.} Normalize affinity matrix as $Z = D^{-1/2}VD^{-1/2}$, where $D$ is a diagonal matrix with $d_{ii} = \sum_{j=1}^{N} V_{ij}$. \\
{\small\bf 3.} Compute $u_1,\ldots,u_m$, top-$m$ unit eigenvectors of $Z$. \\
{\small\bf 4.} Normalize rows of $U = [u_1,\ldots,u_m]$ to have unit length.\\
{\small\bf 5.} Cluster the rows of $U$ into $m$ clusters using $k$-means, and partition $\{x_{1},\ldots,x_{N}\}$ accordingly.\\
\end{algorithm}

We consider the problem of clustering $N$ points, {$\{x_{1},\ldots,x_{N}\} \in \mathcal{X}$},
into $m$ clusters, $C_1,\ldots,C_m$,
using any $n$-point similarity measure $K:\mathcal{X}^n\mapsto\mathbb{R}$.
The similarity among different points is represented by a $n^{th}$ order $N$-dimensional real tensor
$\mathcal{A}$, where $\mathcal{A}_{i_1,i_2,\ldots,i_n} = K\left( x_{i_1},x_{i_2},\ldots,x_{i_n} \right)$
for $i_j = 1,\ldots,N$ with $j=1,\ldots,n$. 
We observe from~\eqref{eq_jt_ker_mul} that
$K$ is permutation invariant, \textit{i.e.}, the similarity does not change if the arguments are re-ordered.
Hence, the tensor $\mathcal{A}$ is super-symmetric.
The idea is to construct a similarity (or, affinity) matrix from $\mathcal{A}$.
This is done by tensor unfolding or mode-1 matricization~\cite{Lathauwer_2000_jour_SIAM},
where we construct a matrix $A \in\mathbb{R}^{N\times N^{n-1}}$ whose 
$j^{th}$ column, for $j= 1 + \sum_{{l=2}}^{n}(i_{l}-1) N^{l-1}$
is the stack of tensor $\mathcal{A}$ obtained by
varying the first index, and fixing others at $(i_2,\ldots,i_n)$.
From $A$, the affinity matrix is constructed as $V=AA^{T}$
that preserves the left eigenvectors of $A$ (or mode-1 eigenvectors of $\mathcal{A}$).
Below, we state the algorithm based on spectral clustering algorithm due to Ng et al.~\cite{Ng_2002_conf_NIPS}.

The complexity of MSC is quite large since
computation of each element in $V$ requires $2N^{n-1}$ kernel computations,
and hence, complexity of determining $V$ turns out to be $O(N^{n+1})$.
We can incorporate the heuristic approach mentioned in~\cite{Govindu_2005_conf_CVPR},
to approximate $V$ as $V \approx \sum_{k=1}^{c} w_{j_k}w_{j_k}^{T}$,
by uniformly sampling $c$ columns from all the $N^{n-1}$ columns of $A$, 
where $w_{j_k}$ denotes the ${j_k}^{th}$ column of $A$.
Though computation reduces to a great extent to $O(cN^2)$ for $c\ll N^{n-1}$,
the performance of the algorithm is quite poor 
in general, when model underlying the data is not known a priori.
In fact, since MSC does not assume geometric structures, the effect of such 
approximations is quite severe in this case.
More efficient methods discussed in~\cite{Chen_2009_jour_JourCV} in context of SCC
can be used.
We do not discuss such approximations here,
but focus on a special case of multi-point JT-kernel,
where cubic time complexity is achieved for MSC.

\subsection{MSC using multi-point linear kernel}
\label{sec_multi_linear}

Recall that the multi-point JT-kernel for $q=2$~\eqref{eq_lin_ker_mul},
which is a multi-point extension of linear kernel.
The structure of this multi-point linear kernel helps 
to compute the affinity matrix $V$ explicitly in cubic time as shown below.
\begin{proposition}
\label{thm_mul_lin_affinity}
 Let $X = (x_1,x_2,\ldots, x_N) \in [0,1]^{d\times N}$ represent the given data matrix and 
 $\bar{x} := \sum_{i=1}^{N} x_{i}$ be the component-wise addition of the vectors.
 Then, the affinity matrix $V$ corresponding 
 to the $n$-point linear kernel $K_{2,n}$~\eqref{eq_lin_ker_mul} can be written as
 \begin{align}
  V &= 4\textstyle\binom{n-1}{1}N^{n-2}\left(X^T X\right)^2 
  + 8\binom{n-1}{2} N^{n-3} \left( X^T \bar{x}\bar{x}^T X \right)
  \nonumber  \\
  &+ 8\textstyle\binom{n-1}{2} N^{n-3} \left( X^T XX^T \bar{x}\mathbf{1}_{1\times N}
  + \mathbf{1}_{N\times1}\bar{x}^T XX^T X \right)
  \nonumber \\
  &+ 12\textstyle\binom{n-1}{3} N^{n-4} \Vert\bar{x}\Vert_2^2 
  \left( X^T \bar{x}\mathbf{1}_{1\times N} + \mathbf{1}_{N\times1} \bar{x}^T X  \right)
  \nonumber \\
  &+ 4\textstyle\binom{n-1}{2} N^{n-5} \Big( N^2 \left\Vert{X^T X}\right\Vert_F^2 
  + 2(n-3)N\left\Vert{X^T\bar{x}}\right\Vert_2^2
  \nonumber \\
  & \qquad\qquad\qquad\qquad\qquad + 2\textstyle\binom{n-3}{2}\Vert\bar{x}\Vert_2^4 \Big)\mathbf{1}_{N\times N}
 \label{eq_lin_V}
 \end{align}
where $\mathbf{1}_{r\times s}$ denotes a $r\times s$ matrix of all 1's,
$\Vert.\Vert_F$ is the Frobenius norm.
\end{proposition}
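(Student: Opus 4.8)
The plan is to evaluate $V=AA^{T}$ one entry at a time and then recognise the result as a sum of matrix products. By the mode-1 flattening together with $V=AA^{T}$, the $(p,q)$ entry is the inner product of the $p$-th and $q$-th rows of $A$, i.e.
\[
V_{pq}=\sum_{i_2,\ldots,i_n=1}^{N}
\mathcal{A}_{p,i_2,\ldots,i_n}\,\mathcal{A}_{q,i_2,\ldots,i_n},
\]
a sum over the $n-1$ free indices $i_2,\ldots,i_n$. The first step is to exploit the additive structure of the linear kernel \eqref{eq_lin_ker_mul}. Writing $s=\sum_{k=2}^{n}x_{i_k}$ for the sum of the free points, \eqref{eq_lin_ker_mul} separates into a part linear in the fixed point $x_p$ and a part depending only on the free indices,
\[
\mathcal{A}_{p,i_2,\ldots,i_n}
= 2\,x_p^{T}s \;+\; 2\!\!\sum_{2\le k<l\le n}\!\! x_{i_k}^{T}x_{i_l}
\;=:\;\alpha_p+\beta .
\]
Expanding the product gives $\mathcal{A}_{p,\cdot}\mathcal{A}_{q,\cdot}=\alpha_p\alpha_q+(\alpha_p+\alpha_q)\beta+\beta^{2}$, and I would sum the three groups over the free indices separately.

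The heart of the argument is a coincidence-pattern analysis of these index sums. Since each free index runs independently over $\{1,\ldots,N\}$, any index not appearing inside an inner product is summed out to a plain factor $N$, a position forced to equal another aggregates through $\bar{x}=\sum_i x_i$, and a position tied to a second one through a shared inner product produces $X^{T}X$ or $XX^{T}$. For $\sum\alpha_p\alpha_q=4\sum_{i_2,\ldots,i_n}(x_p^{T}s)(x_q^{T}s)$ I would expand $s$ and split the double sum over positions $k,l\in\{2,\ldots,n\}$ according to $k=l$ versus $k\neq l$. The diagonal $k=l$, of multiplicity $\binom{n-1}{1}$ and leaving $N^{n-2}$, yields $\sum_i(x_p^{T}x_i)(x_q^{T}x_i)=\big((X^{T}X)^{2}\big)_{pq}$; the off-diagonal $k\neq l$, of multiplicity $2\binom{n-1}{2}$ and leaving $N^{n-3}$, factorises into $(x_p^{T}\bar{x})(x_q^{T}\bar{x})=\big(X^{T}\bar{x}\bar{x}^{T}X\big)_{pq}$. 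These give the first two terms of \eqref{eq_lin_V}.

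The same bookkeeping handles the remaining terms. In the cross term $(\alpha_p+\alpha_q)\beta$ a triple of positions appears (one from $\alpha$, a pair from $\beta$); classifying by whether the $\alpha$-position coincides with one of the $\beta$-pair or with neither gives, respectively, $x_p^{T}XX^{T}\bar{x}=\big(X^{T}XX^{T}\bar{x}\big)_p$ (the third term) and $(x_p^{T}\bar{x})\Vert\bar{x}\Vert_2^{2}$ (the fourth, $p$- and $q$-dependent term), the binomial coefficients following from identities such as $(n-3)\binom{n-1}{2}=3\binom{n-1}{3}$. Finally $\beta^{2}$ carries the only fully index-free contribution and hence the $\mathbf{1}_{N\times N}$ term; here I would classify the two unordered pairs $\{k,l\}$ and $\{k',l'\}$ by their overlap — equal pairs (multiplicity $\binom{n-1}{2}$) give $\Vert X^{T}X\Vert_F^{2}$, pairs sharing one position (multiplicity $2(n-3)\binom{n-1}{2}$) give $\Vert X^{T}\bar{x}\Vert_2^{2}$, and disjoint pairs (counted analogously) give $\Vert\bar{x}\Vert_2^{4}$ — the lowest power $N^{n-5}$ arising because $\beta^{2}$ ties up at most four free positions.

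I expect the main obstacle to be exactly this combinatorial accounting: getting every collision multiplicity and its accompanying power of $N$ correct, and then folding those multiplicities into the binomial coefficients of \eqref{eq_lin_V} via the relevant identities. A secondary but error-prone point is reading each surviving index sum as the right matrix expression — for instance distinguishing $\sum_i(x_p^{T}x_i)(x_q^{T}x_i)=((X^{T}X)^{2})_{pq}$ from $\sum_i(x_p^{T}x_i)(x_i^{T}\bar{x})=(X^{T}XX^{T}\bar{x})_p$ — and keeping track of which contributions are constant, which scale as a rank-one $\mathbf{1}$-bordered matrix, and which are genuinely $N\times N$. Once the cases are organised, the kernel split and the passage to matrix notation are routine.
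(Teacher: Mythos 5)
Your proposal is correct and follows essentially the same route as the paper's proof: the paper likewise expands $V=AA^{T}$ over the free indices $i_2,\ldots,i_n$, splits each kernel value into the part linear in the fixed point plus the part depending only on the free points (your $\alpha_p+\beta$ is exactly the paper's column formula $2X^{T}\bigl(\sum_{l}x_{i_l}\bigr)+2\bigl(\sum_{l<k}x_{i_l}^{T}x_{i_k}\bigr)\mathbf{1}_{N\times 1}$), and evaluates the three resulting groups by the same coincidence analysis of indices, with the same multiplicities and identities (e.g.\ $(n-3)\binom{n-1}{2}=3\binom{n-1}{3}$); working entrywise rather than with column outer products is an immaterial difference.

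One substantive remark: if you actually complete the step you label ``counted analogously'' for the disjoint pairs in $\beta^{2}$, the count of ordered pairs of disjoint unordered pairs is $\binom{n-1}{2}\binom{n-3}{2}$, consistent with the partition $1+2(n-3)+\binom{n-3}{2}=\binom{n-1}{2}$ of the choices of second pair; this yields the coefficient $\binom{n-3}{2}\Vert\bar{x}\Vert_2^4$, not $2\binom{n-3}{2}\Vert\bar{x}\Vert_2^4$, inside the final bracket of \eqref{eq_lin_V}. A direct check with $N=1$, $d=1$, $n=5$, $x_1=1$ (so $A=[20]$ and $V=400$, while \eqref{eq_lin_V} as printed evaluates to $424$) confirms this: your completed argument would surface a factor-of-two discrepancy in the last term of the stated formula rather than reproduce it verbatim, so the gap here lies in the statement (and in the paper's own unelaborated ``similarly'' step), not in your approach.
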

\begin{proof}
We provide a brief sketch of the proof.
Note that $V=AA^T$ and it can be written as 
\begin{align}
V &= 4 \sum_{i_2,\ldots,i_n = 1}^{N} \Bigg[
X^T \left(\sum_{l=2}^{n} \sum_{r=2}^{n} x_{i_l} x_{i_r}^T \right) X
\nonumber\\
&+ X^T \left( \sum_{l=2}^{n} \sum_{r=2}^{n} \sum_{s=r+1}^{n} x_{i_l} x_{i_r}^T x_{i_s} \right) \mathbf{1}_{1\times N}
\nonumber \\
&+ \mathbf{1}_{1\times N} \left( \sum_{r=2}^{n} \sum_{l=2}^{n} \sum_{k=l+1}^{n} x_{i_r} x_{i_l}^T x_{i_k} \right) X
\nonumber \\
&+ \left( \sum_{l=2}^{n} \sum_{r=2}^{n} \sum_{k=l+1}^{n} \sum_{s=r+1}^{n} x_{i_l}^T x_{i_s} x_{i_r}^T x_{i_s} \right) \mathbf{1}_{N\times N} \Bigg]\;,
\label{eq_thm_lin_V}
\end{align}
where we use the fact that given $i_2,\ldots,i_n$, the $j^{th}$ column
of $A$, where $j= \left(1 + \sum_{l=2}^{n} (i_{l}-1) N^{l-2} \right)$,
is simply
$2X^T \left(\sum_{l=2}^{n} x_{i_l} \right)
+ 2\left(\sum_{l=2}^{n} \sum_{k=l+1}^{n} x_{i_l}^T x_{i_k}\right) \mathbf{1}_{N\times1}$.
Comparing~\eqref{eq_thm_lin_V} and~\eqref{eq_lin_V},
we observe that the first term in~\eqref{eq_thm_lin_V} 
decomposes into the first two terms of~\eqref{eq_lin_V}.
The second and third terms of~\eqref{eq_thm_lin_V} contribute to the third and 
fourth terms of~\eqref{eq_lin_V}, while the last term of~\eqref{eq_thm_lin_V} is equal to 
the last term in~\eqref{eq_lin_V}.
Also, the outer summation in~\eqref{eq_thm_lin_V} may be pushed inside
to simplify the results of the inner summations as shown below.
For the first term, we consider the outer product of same and distinct vectors
separately as
\begin{align}
&\sum_{i_2,\ldots,i_n = 1}^{N} \sum_{l=2}^{n} \sum_{r=2}^{n} x_{i_l} x_{i_r}^T
\label{eq_thm_lin_V1}\\
\nonumber
&= N^{n-2}\sum_{l=2}^{n} \sum_{i_l = 1}^{N} x_{i_l} x_{i_l}^T + N^{n-3}
\sum_{{r,l=2,  r\ne l}}^{n} \sum_{i_l,i_r = 1}^{N} x_{i_l} x_{i_r}^T
\end{align}
since the terms act as constants while summing over all indices other than $i_l$ and $i_r$, and each such summation adds up $N$ similar terms, leading to the constants
outside the summations. 
Now, one can verify that $XX^T = \sum_{i=1}^{N} x_i x_i^T$ and
$\bar{x}\bar{x}^T = \sum_{i,j=1}^{N} x_i x_j^T$.
Plugging this in~\eqref{eq_thm_lin_V1}, and noting that there are $(n-1)$ terms in the
first summation and $2\binom{n-1}{2}$ terms in the second leads to the first two terms of~\eqref{eq_lin_V}.
To deal with the second term of~\eqref{eq_thm_lin_V}, it is enough to show that
\begin{align}
&\sum_{i_2,\ldots,i_n = 1}^{N} \sum_{l=2}^{n} \sum_{r=2}^{n} \sum_{s=r+1}^{n} x_{i_l} x_{i_r}^T x_{i_s} 
\nonumber\\
&= 2\textstyle\binom{n-1}{2}N^{n-3} XX^T \bar{x} 
+ 3\textstyle\binom{n-1}{3} N^{n-4} \Vert\bar{x}\Vert_2^2 \bar{x}\;.
\label{eq_thm_lin_V2}
\end{align}
The constants $N^{n-3}$ and $N^{n-4}$ appear as before due to summation over
indices, which are absent from the terms involved.
We consider the cases $r=l$ and $r\neq l$ separately. For $r=l$,
we obtain half of the first term in~\eqref{eq_thm_lin_V2} since
$\sum_{r=2}^{n} \sum_{s=r+1}^{n} \sum_{i_r,i_s = 1}^{N} x_{i_r} x_{i_r}^T x_{i_s} 
= \textstyle\binom{n-1}{2} XX^T \bar{x}$.
For $r\neq l$, the situation becomes complicated as we may have $s=l$. 
But this happens only in $\binom{n-1}{2}$ cases, which adds up to give
the remaining half of the first term in~\eqref{eq_thm_lin_V2}.
The rest of the terms on the left in~\eqref{eq_thm_lin_V2}
have distinct indices, and hence, summing over them gives a term of the
form $\sum_{i,j,k=1}^{N} x_i x_j^T x_k = \Vert\bar{x}\Vert_2^2 \bar{x}$.
But, there are $3\binom{n-1}{3}$ such terms, and hence, the result.
Similarly, computing the other terms in~\eqref{eq_thm_lin_V}, one can 
derive the expression in~\eqref{eq_lin_V}.
\end{proof}
\begin{table*}
\centering
\caption{Comparison of MSC and Gaussian spectral clustering (for Isolet dataset, we consider only classes A,B,C).}
\label{tab:multicol_spec}
\begin{tabular}{|c||l|l|l|l|}
\hline
Dataset  	& Gaussian SC ($\sigma$)	& 2-point JT kernel ($q$)	& 3-point JT kernel ($q$)	& $n$-point linear kernel ($n$)\\
\hline \hline
Breast Cancer		&0.968 (0.5)			& 0.963 (2.00)			& \textbf{0.971} (1.0)	& 0.966 (6-12)	\\
Isolet (ABC)		&0.863 (10.0)			& \textbf{0.965} (1.25)	& \textbf{0.965} (1.0-1.25)	& 0.929 (4)	\\
Iris			&0.930 (0.15)			& 0.860 (0.0-0.5)		& \textbf{0.965} (0.5)	& 0.792 (10)	\\
Mammographic mass	&0.799 (0.3)			& 0.807 (2.0)			& 0.776 (1.5)			& \textbf{0.810} (4-12)	\\
Semeion hand-written	&\textbf{0.604} (5.0)	& 0.534 (1.25)			& 0.569 (0.25)			& 0.561 (4)	\\
\hline
\end{tabular}
\end{table*}
The key fact in above result is that all computations in~\eqref{eq_lin_V}
are at most $O(N^3)$, which implies that $V$ is computable in cubic time.
Further, though the above result holds for any $n\in\mathbb{N}$, few simplifications 
are possible for $n\leqslant4$. For instance, if $n=2$, all terms vanish except first, giving
$V = 4 (X^T X)^2$,
which has the same eigen structure as $X^T X$. 
Hence, spectral clustering with $V$ is equivalent
to the case of constructing affinity using the Gram matrix.
We illustrate the behavior of the multi-point linear kernel
with a simple example of two concentric arcs in $[0,1]^2$
(Figure~\ref{fig_lin_spectral}).
This is an example where $k$-means algorithm fails.
We use both Gaussian spectral clustering and MSC with $n$-point linear kernel,
and observe that for small $n$ (MSC) and large $\sigma$ (Gaussian)
both methods are quite similar to $k$-means. 
Accurate clustering can be achieved for Gaussian, but this requires
proper tuning of $\sigma$ as we see that even for small variations 
of $\sigma$-values considered, the results vary considerably.
On the other hand, if the large number of points are considered,
MSC gives accurate results. In fact, in this example,
we observed that results improved with increase in $n$, and for $n\geqslant 7$
correct clustering were always achieved.

\begin{figure}[h]
\centering
\includegraphics[width=0.4\textwidth]{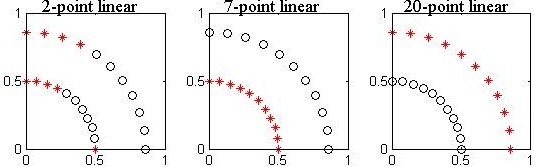}
\\
~\includegraphics[width=0.4\textwidth]{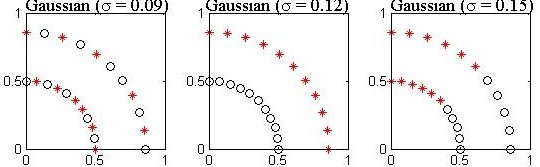}
\caption{(top row). Clustering obtained using MSC with $n$-point linear kernels for $n=2,7,20$, and 
(bottom row) results for Gaussian spectral clustering with $\sigma = 0.09, 0.12, 0.15$, respectively.}
\label{fig_lin_spectral}
\end{figure}

\begin{figure*}
\centering
\caption{Segmentation of images using spectral clustering with Gaussian and JT-kernels,  and MSC with $n$-point linear kernel.
Each row shows results for one image, and the best parameter value for each similarity is indicated
($q=1$ for JT denotes the JS-kernel).}
\label{fig_image}
\begin{tabular}{|c|c|c|c||c|c|c|c|}
\hline
\textbf{Original} &  \textbf{Gaussian} & \textbf{JT-kernel}	& \textbf{$n$-point}
&
\textbf{Original} &  \textbf{Gaussian} & \textbf{JT-kernel}	& \textbf{$n$-point}
\\
\textbf{image}	  &  \textbf{kernel} 	& 			& \textbf{linear}
&
\textbf{image}	  &  \textbf{kernel} 	& 			& \textbf{linear}
\\
\hline \hline
\includegraphics[width=\WidthOne]{./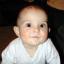} & 
\includegraphics[width=\WidthOne]{./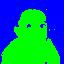} &
\includegraphics[width=\WidthOne]{./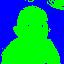} &
\includegraphics[width=\WidthOne]{./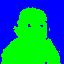}
&
\includegraphics[width=\WidthOne,height=\WidthOne]{./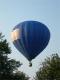}   & 
\includegraphics[width=\WidthOne,height=\WidthOne]{./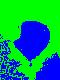} &
\includegraphics[width=\WidthOne,height=\WidthOne]{./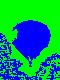} &
\includegraphics[width=\WidthOne,height=\WidthOne]{./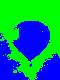}
\\
baby & $\sigma = 0.02$ & $q=1.25$ & $n=8$
&
balloon & $\sigma = 2.0$ & $q=1.0$ & $n=6$
\\
\hline \hline
\includegraphics[width=\WidthOne]{./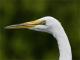}   & 
\includegraphics[width=\WidthOne]{./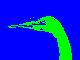} &
\includegraphics[width=\WidthOne]{./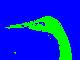} &
\includegraphics[width=\WidthOne]{./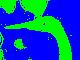}
&
\includegraphics[width=\WidthOne]{./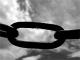}   & 
\includegraphics[width=\WidthOne]{./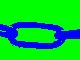} &
\includegraphics[width=\WidthOne]{./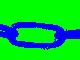} &
\includegraphics[width=\WidthOne]{./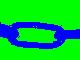}
\\
duck & $\sigma = 0.2$ & $q=1.25$ & $n=6$
&
chain & $\sigma = 0.02$ & $q=1.0$ & $n=10$
\\
\hline \hline
\includegraphics[width=\WidthOne,height=\WidthOne]{./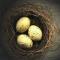} & 
\includegraphics[width=\WidthOne,height=\WidthOne]{./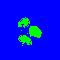} &
\includegraphics[width=\WidthOne,height=\WidthOne]{./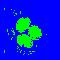} &
\includegraphics[width=\WidthOne,height=\WidthOne]{./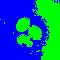}
&
\includegraphics[width=\WidthOne,height=\WidthOne]{./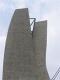} & 
\includegraphics[width=\WidthOne,height=\WidthOne]{./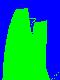} &
\includegraphics[width=\WidthOne,height=\WidthOne]{./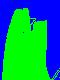} &
\includegraphics[width=\WidthOne,height=\WidthOne]{./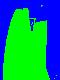}
\\
eggs & $\sigma = 0.02$ & $q=0.5$ & $n=8$
&
building & $\sigma = 0.2$ & $q=1.25$ & $n=8$
\\
\hline \hline
\includegraphics[width=\WidthOne]{./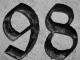}   & 
\includegraphics[width=\WidthOne]{./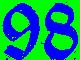} &
\includegraphics[width=\WidthOne]{./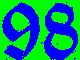} &
\includegraphics[width=\WidthOne]{./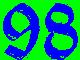}
&
\includegraphics[width=\WidthOne]{./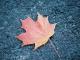}   & 
\includegraphics[width=\WidthOne]{./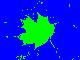} &
\includegraphics[width=\WidthOne]{./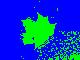} &
\includegraphics[width=\WidthOne]{./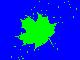}
\\
number & $\sigma = 0.02$ & $q=0.75$ & $n=6$
&
leaf & $\sigma = 0.2$ & $q=0.5$ & $n=6$
\\
\hline \hline
\includegraphics[width=\WidthOne,height=\WidthOne]{./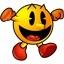} & 
\includegraphics[width=\WidthOne,height=\WidthOne]{./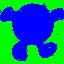} &
\includegraphics[width=\WidthOne,height=\WidthOne]{./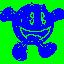} &
\includegraphics[width=\WidthOne,height=\WidthOne]{./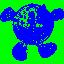}
&
\includegraphics[width=\WidthOne,height=\WidthOne]{./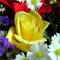} & 
\includegraphics[width=\WidthOne,height=\WidthOne]{./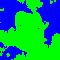} &
\includegraphics[width=\WidthOne,height=\WidthOne]{./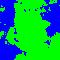} &
\includegraphics[width=\WidthOne,height=\WidthOne]{./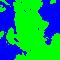}
\\
smiley & $\sigma = 2.0$ & $q=1.0$ & $n=12$
&
flowers & $\sigma = 2.0$ & $q=1.25$ & $n=10$
\\
\hline \hline
\includegraphics[width=\WidthOne]{./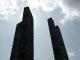}   & 
\includegraphics[width=\WidthOne]{./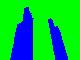} &
\includegraphics[width=\WidthOne]{./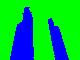} &
\includegraphics[width=\WidthOne]{./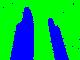}
&
\includegraphics[width=\WidthOne,height=\WidthOne]{./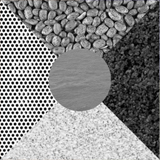} & 
\includegraphics[width=\WidthOne,height=\WidthOne]{./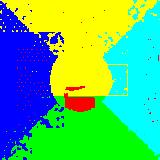} &
\includegraphics[width=\WidthOne,height=\WidthOne]{./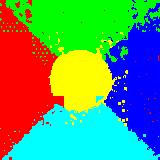} &
\includegraphics[width=\WidthOne,height=\WidthOne]{./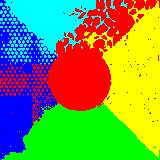}
\\
tower & $\sigma = 0.2$ & $q=1.25$ & $n=8$
&
texture & $\sigma = 2$ & $q=1.25$ & $n=12$
\\
\hline
\end{tabular}
\end{figure*}

\section{Experimental results}
\label{sec_experiments}
We compare the performance of MSC using the proposed
multi-point JT-kernels with Gaussian spectral clustering.
We do not compare our approach with methods proposed
in~\cite{Chen_2009_jour_JourCV,Govindu_2005_conf_CVPR} since these
methods require prior knowledge of geometric structures, and cannot be
applied to arbitrary data sets. We perform preliminary study on
standard data sets from~\cite{Frank_2010_misc_UCI}. 
Table~\ref{tab:multicol_spec} shows the accuracy of spectral clustering 
with Gaussian and 2-point JT-kernels, as well as MSC with
3-point JT-kernel and $n$-linear kernel.  
The performance measure considered is the purity of clusters obtained.
For 3-point JT-kernel, the $O(N^{n+1})$ unfolding method is used
since approximations give poor performance. For JT-kernels,
$q$ is varied over $[0,2]$ in steps of 0.25, where JS-kernel is used for the case $q=1$.
We also consider MSC with the $n$-point linear kernel, 
where we vary $n$ from 2 to 12 in steps of 2.
Similarly, for Gaussian case, we tune $\sigma$ to improve performance.
We note here that tuning $\sigma$ properly appeared to be more difficult 
than the parameters of our algorithms.
The accuracy is averaged over multiple runs of $k$-means step.
In Table~\ref{tab:multicol_spec}, we present the best accuracy results
achieved with each method. 
In general using 3-point JT-kernel is more effective,
though computationally extensive, which indicates that considering 
multiple points can improve performance.
The $n$-point linear kernels, which have reduced computational complexity
also perform quite well.

We use 2-point JT-kernel and $n$-point linear kernel for segmentation
of a number of images from~\cite{BCVG_misc_Datasets} and other sources (shown in Figure~\ref{fig_image}),
where each image is reduced to $60\times60$ or $80\times 60$.
We incorporate MSC and the proposed kernels
into the segmentation approach proposed in~\cite{Shi_2000_jour_TPAMI}.
For this,
we apply JT and $n$-point linear kernels on the pixel intensities 
(in \emph{texture} image, the intensities of the pixel and its neighbouring  8 pixels is quantized
into a histogram of 16 bins)
and compute the matrix $V$ using~\eqref{eq_lin_V}.
%
%
To make our method compatible with the partitioning algorithm, spectral clustering 
is performed using the affinity matrix
$M = V^{(1-\lambda)}R^{\lambda}$, where $\lambda=0.008$ and $R$ represents 
the similarity matrix for pixel locations computed as below.
If $p_i$ is location of the $i^{th}$ pixel, then
$R_{i,j} = e^{-\Vert p_{i}-p_{j} \Vert^{2}}$ if $\Vert p_{i}-p_{j} \Vert <r$,
and zero otherwise.
The idea is to partition 
the pixels into a number of clusters ($m=10$), and then group neighboring clusters
till we have desired segments, such that Ncut of the graph is minimized at each iteration.
Figure~\ref{fig_image} shows the best results for each of Gaussian, JT and $n$-point linear kernels
after tuning parameters. On an average, 
relative times of JT and $n$-point linear were 0.99 and 1.03, respectively, compared to Gaussian.
%
We observe that:
\\{\bf (1) }
JT-kernel with $q$ close to 1 (0.75--1.25) gives best results among
all values of $q$ in most cases. 
\\ {\bf (2) }
Though 2-point linear kernel gives poor results, 
as $n$ increases better partitions are obtained and mostly linear kernel
over $n=6$ to 10 points captures all necessary details.
\\ {\bf (3) }
On the whole, Figure~\ref{fig_image} makes it evident that in most cases,
JT and $n$-point kernel perform at par with Gaussian similarity,
and in fact, in some cases, JT-kernel shows significant
improvements (for instance, \emph{texture}, \emph{duck}, \emph{eggs}, \emph{building}).
The $n$-point linear kernel has a very simple structure, that of the dot-product,
and hence, is relatively poor. However, there are instances where it still outperforms
Gaussian (\emph{balloon}, \emph{texture}). 
To this end, Gaussian works significantly better than others only in \emph{smiley} image,
while segments in \emph{flowers} for all kernels are quite different.
\\ {\bf (4) }
In one case (\emph{flowers}), the segments obtained with JT and $n$-point
kernels are same, but this is different from that of Gaussian. 
This can be justified by the similar nature of JT to $n$-point (extension of JT with $q=2$),
that is significantly different from Gaussian.

One can note that similarities were constructed only using the pixel intensities and locations
as considered in~\cite{Shi_2000_jour_TPAMI}.
One can easily incorporate more sophisticated features into this setting,
and easily use JT or $n$-point linear kernels to evaluate their similarities. 
However, by construction and justifications given in Section~\ref{sec_real_extn}, the JT-kernel  
appears to be more applicable for histogram type of data such as pixel intensities.
On the other hand, Gaussian similarity is more applicable for 
pixel distances as used here.

\section{Discussions and concluding remarks}
\label{sec_conclusion}
We develop a spectral clustering (MSC) technique that uses a similarity among 
more than two points.
Our method is more general than existing algorithms of similar 
nature~\cite{Govindu_2005_conf_CVPR,Chen_2009_jour_JourCV}, as the algorithm 
does not depend on the similarity measure considered.
Though not discussed here, but one can easily incorporate out-of-sample extensions such
as Nystr\"{o}m's approximation to MSC.
To extend the idea further, it would be interesting to see if spectral methods
can be used on the similarity tensor, without unfolding it.

We also introduced the notion of multi-point kernels and
proposed a class of multi-point similarity measures that arise out of 
extension of positive definite two-point kernels. 
We also derived a multi-point extesion of linear kernels,
obtained for $q=2$ in multi-point JT-kernel that significantly simplifies computation 
of MSC algorithm. 
Toy examples similar to 
Figure~\ref{fig_lin_spectral} were studied, which revealed that $n$-point linear 
kernels were always able cluster accurately (above some $n$) when the clusters
are linearly separable. 
This promises a new direction of study: linear separability in the spectral clustering framework.


\section*{Acknowledgement}
D. Ghoshdastidar is supported by Google India Ph.D.
Fellowship in Statistical Learning Theory.

{\small
\bibliographystyle{ieee}
\bibliography{JT}
}

\end{document}